\newtheorem{proposition}{Proposition}
\newcommand{\KL}[2]{\mbox{KL}\left[ #1 \,\|\, #2 \right]}
\newcommand{\Exp}[1]{{E}\left[ #1 \right]}
\newcommand{\Expb}[2]{{E}_{#1}\left[ #2 \right]}
\newcommand{\lr}{\mbox{\tiny\textsc{lr}}}
\newcommand{\Ogan}{\mathcal{O}_{\mbox{\tiny\sc gan}}}
\title{\acronym: Reducing Mode Collapse in GANs using Implicit Variational Learning}
\author{
  Akash Srivastava\\
  School of Informatics \\
  University of Edinburgh \\
  \texttt{akash.srivastava@ed.ac.uk} \\
  \And
  Lazar Valkov \\
  School of Informatics \\
  University of Edinburgh \\
  \texttt{L.Valkov@sms.ed.ac.uk} \\
  \AND
  Chris Russell \\
  The Alan Turing Institute \\
  London \\
  \texttt{crussell@turing.ac.uk} \\
  \And
  Michael U. Gutmann \\
  School of Informatics \\
  University of Edinburgh \\
  \texttt{Michael.Gutmann@ed.ac.uk} \\
  \And
  Charles Sutton \\
  School of Informatics \& The Alan Turing Institute \\
  University of Edinburgh \\
  \texttt{csutton@inf.ed.ac.uk} \\
}
\newcommand{\acronym}{\textsc{Veegan}\xspace}
\newcommand{\reconstructor}{reconstructor network\xspace}
\newcommand{\comment}[1]{} 
\newcommand{\AS}[1]{\comment{\textcolor{BrickRed}{\textit{\lbrack AS: #1 \rbrack}}}}
\newcommand{\MG}[1]{\comment{\textcolor{Orange}{\textit{\lbrack MG: #1 \rbrack}}}}
\begin{document}

\maketitle

\begin{abstract}
Deep generative models provide  powerful 
tools for distributions over complicated
manifolds, such as those of natural images.
But many of these methods, including generative adversarial networks (GANs), can be  difficult to train, in part because they are prone to mode collapse, which means that they characterize only a few modes of the true distribution.
To address this,
we introduce \acronym,
which features a reconstructor network, reversing the action of the generator by mapping from
data to noise.
Our training objective retains
the original asymptotic consistency guarantee of  GANs, 
and can
be interpreted as
a novel autoencoder loss over the noise. 
In sharp contrast to a traditional
autoencoder over data points, \acronym does not require specifying
a loss function over the data,
but rather only over the representations, which are standard normal by assumption.
On an extensive set of synthetic  and real world 
image datasets, \acronym indeed
resists mode collapsing
to a far greater extent than other recent GAN variants, and produces more realistic samples.
\end{abstract}


\section{Introduction}
\label{sec:intro}

Deep generative models are a topic of enormous recent interest, providing a powerful class of tools for the unsupervised learning of probability distributions
over difficult manifolds such as natural images \cite{kingma2013auto,rezende2014stochastic,gan}. Deep generative models are usually 
implicit statistical models \citep{diggle:gratton}, also called implicit probability distributions, meaning that
they do not induce a density function that can be tractably computed,
but rather provide a simulation procedure to generate new data points.
Generative adversarial networks (GANs) \cite{gan} are an attractive such method,
which have seen promising recent successes \cite{radford2015unsupervised,CycleGAN2017,sonderby2016amortised}.
GANs train two deep networks in concert: a generator network  that
maps random noise, usually drawn from a multi-variate Gaussian, to data items; and a discriminator network that
estimates the likelihood ratio of the generator network to the data distribution,
and is trained using an adversarial principle.
Despite an enormous amount of recent work, 
GANs are notoriously fickle to train, and it has been observed \cite{salimans16improved,Che2017ModeRG}
that they often suffer from  \emph{mode collapse}, in which the generator
network learns how to generate samples
from a few modes of the data distribution but misses
many other modes, even though samples from
the missing modes occur throughout  the training data.

To address this problem, we introduce \acronym,\footnote{\acronym is a Variational Encoder Enhancement
to Generative Adversarial Networks. \url{https://akashgit.github.io/VEEGAN/}} a variational principle for estimating
implicit probability distributions that avoids mode collapse. While the generator network
maps Gaussian random noise to data items, \acronym introduces an additional \emph{reconstructor} network that
maps the true data distribution to Gaussian random noise.
We train the generator and reconstructor networks jointly by
introducing an implicit variational principle, 
which encourages the reconstructor network
not only to map the data distribution to a Gaussian, but also to approximately reverse
the action of the generator.
Intuitively, if the reconstructor learns
both to map all of the true data to the noise distribution and is an approximate inverse of the generator network, this will encourage
the generator network to map from the noise distribution to the entirety of the true data distribution, thus resolving mode collapse. 


Unlike other  adversarial methods that  train reconstructor networks \citep{donahue2016adversarial,ali2016,tran17dim},
the noise autoencoder dramatically reduces mode collapse.
Unlike recent adversarial methods that also make use of a data autoencoder \cite{larsen2015autoencoding,Che2017ModeRG,makhzani2015adversarial},
\acronym autoencodes noise vectors
rather than data items. This is a significant difference,
because choosing an autoencoder loss for images is problematic,
but for Gaussian noise vectors, an $\ell_2$ loss is entirely 
natural.
Experimentally, on both synthetic and real-world image data sets, we find that \acronym is dramatically less susceptible to mode collapse, and produces higher-quality samples, than other
state-of-the-art methods.

\section{Background}

Implicit probability distributions are  specified 
by a sampling procedure, but do not have a tractable density \cite{diggle:gratton}.
Although a natural choice in many settings, implicit distributions
have historically been seen as difficult to estimate. However, recent progress in formulating  density estimation  as a problem of supervised learning has allowed methods from the classification literature to enable implicit model estimation, both in the general case~\cite{Gutmann2014,1611.10242} and for deep generative adversarial networks (GANs) in particular \cite{gan}. Let $\{ x_i \}_{i=1}^N$ denote the training data, where each $x_i \in \R^D$ is drawn 
from an unknown distribution $p(x)$.
A GAN is a neural network
$G_\gamma$ that maps representation vectors $z \in \R^K$, typically 
 drawn from a standard normal distribution, to data items $x \in \R^D$. 
Because this mapping defines an implicit probability distribution, 
training is accomplished by introducing a second neural network
$D_\omega$, called a discriminator, 
whose goal is to distinguish generator samples from true data samples. The parameters of these  networks are estimated
by solving the minimax problem
\begin{equation*}
\max_\omega \min_\gamma \Ogan (\omega, \gamma) :=
 \Expb{z}{\log \sigma \left(D_\omega(G_\gamma (z))\right)}  + \Expb{x}{\log\left( 1-\sigma \left(D_\omega(x)\right)\right)},
\end{equation*}
where $E_z$ indicates an expectation over the standard normal $z$,
$E_x$ indicates an expectation over the data distribution $p(x)$, and
$\sigma$ denotes the sigmoid function. At the optimum, in the limit of
infinite data and arbitrarily powerful networks, we will have
$D_\omega = \log q_\gamma(x)/p(x)$, where $q_\gamma$ is the density
that is induced by running the network $G_\gamma$ on normally
distributed input, and hence that $q_\gamma = p$ \cite{gan}.

Unfortunately, GANs can be difficult and unstable to train \citep{salimans16improved}. One common pathology
that arises in GAN training is mode collapse, which is when
samples from $q_\gamma(x)$ capture only a few of the modes
of $p(x)$.
An intuition behind why mode collapse occurs
is that the only information that the objective function provides 
about $\gamma$ is mediated
by the discriminator network $D_\omega$. For example, if
$D_\omega$ is a constant, then $\Ogan$ is constant with respect to $\gamma$, and so learning the generator is impossible.
When this situation occurs in a localized region of input space,
for example, when there is a specific type of image that the
generator cannot replicate,
this can cause mode collapse.

\section{Method}
\MG{Serious re-write: please check that you are ok with it and that I didn't say anything stupid or wrong}

The main idea of \acronym is to introduce a second network $F_\theta$
that we call the \emph{\reconstructor} which is learned  both to map the
true data distribution $p(x)$ to a Gaussian and to approximately invert
the generator network.

To understand why this might
prevent mode collapse, consider the example in
\autoref{fig:cartoon_all}. In both columns of the figure, the middle
vertical panel represents the data space, where in this example the
true distribution $p(x)$ is a mixture of two Gaussians.  The bottom
panel depicts the input to the generator, which is drawn from a
standard normal distribution $p_0= \calN(0, I)$,
and the top panel depicts the result of applying
the reconstructor network to the generated
and the true data. The arrows labeled
 $G_\gamma$ show the action of the generator. The purple arrows labelled $F_\theta$ show the action of the reconstructor on the true data,
 whereas the green arrows show the action of the reconstructor on data from the generator. In this example, the
generator has captured only one of the two modes of $p(x)$. The difference between Figure \ref{fig:cartoon1} and \ref{fig:cartoon2} is that the reconstructor networks
are different.

First, let us suppose (Figure \ref{fig:cartoon1}) that we have successfully
trained $F_\theta$ so that it is approximately
the inverse of $G_\gamma$. As we have assumed mode collapse
however, the training data for the reconstructor network $F_\theta$
does not include data items from the ``forgotten" mode of $p(x),$ therefore
 the action of $F_\theta$ on data from that mode is
ill-specified. This means that $F_\theta(X), X \sim p(x)$ is unlikely
to be Gaussian and we can use this mismatch as an indicator of mode
collapse.

Conversely, let us suppose (Figure \ref{fig:cartoon2}) that $F_\theta$ is
successful at mapping the true data distribution 
to a Gaussian.  In that case, if
$G_\gamma$ mode collapses, then $F_\theta$ will not map all $G_\gamma(z)$ back to the original $z$
and the resulting penalty provides us with a strong learning signal
for both $\gamma$ and $\theta$. 

Therefore, the learning principle for \acronym
will be to train $F_\theta$ to achieve
both of these objectives simultaneously. 
Another way of stating this intuition is that
if the same reconstructor network maps both
the true data and the generated data to a Gaussian distribution, then the generated data
is likely to coincide with true data.
To measure whether $F_\theta$ approximately
inverts $G_\gamma$, we use an autoencoder loss.
 More precisely,  we 
minimize a loss function, like $\ell_2$ loss between $z \sim p_0$ 
and $F_\theta(G_\gamma(z)))$.
To quantify whether $F_\theta$ maps
the true data distribution to a Gaussian,
we use the cross entropy $H(Z, F_\theta(X))$ between $Z$ and
$F_\theta(x)$.
This  boils down to learning $\gamma$ and $\theta$ by
minimising the sum of these two objectives, namely
\begin{equation}
  \calO_{\mathrm{entropy}}(\gamma,\theta) = E\left[\| z - F_\theta(G_\gamma(z))\|_2^2\right] + H(Z, F_\theta(X)).\label{eq:intuition}
\end{equation}
While this objective captures the main idea of our paper, it cannot be
easily computed and minimised. We next transform it into a
computable version and derive theoretical guarantees.



\begin{figure}
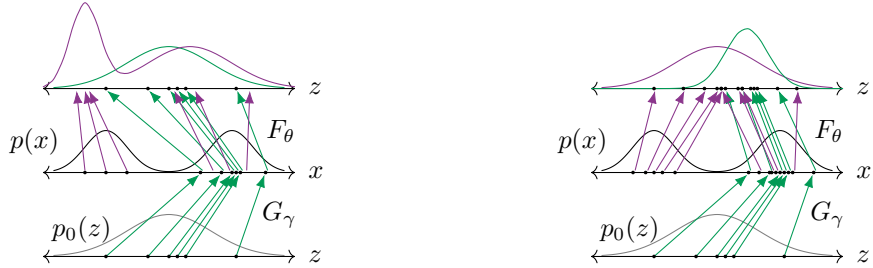

\centering
\begin{subfigure}[t]{0.45\linewidth}
\centering
\includegraphics[width=0.7\textwidth]{collapse_cartoon_2.tikz}
\caption{Suppose $F_\theta$ is
trained to approximately invert $G_\gamma.$
Then applying $F_\theta$ to true data
is likely to produce a non-Gaussian distribution, allowing us to detect mode collapse.\label{fig:cartoon1}}

\end{subfigure}\hspace{1cm}%
\begin{subfigure}[t]{0.45\linewidth}
\centering
\includegraphics[width=0.7\textwidth]{collapse_cartoon_1.tikz}
\caption{When $F_\theta$ is trained to map
the data to a Gaussian distribution, then
treating $F_\theta \circ G_\gamma$ as an autoencoder
provides learning signal to correct $G\gamma$.\label{fig:cartoon2}}
\end{subfigure}
\caption{Illustration of how a reconstructor network $F_\theta$
can help to detect mode collapse in a deep generative
network $G_\gamma$. The data distribution is $p(x)$ and the Gaussian is $p_0(z)$. See text for details.\label{fig:cartoon_all}}
\end{figure}

\subsection{Objective Function}
Let us denote the distribution of the outputs of the reconstructor
network when applied to a fixed data item $x$ by $p_\theta(z|x)$ and
when applied to all $X \sim p(x)$ by $p_\theta(z) = \int p_\theta(z|x)
p(x) \,dx$. The conditional distribution $p_\theta(z|x)$ is Gaussian
with unit variance and, with a slight abuse of notation,
(deterministic) mean function $F_\theta(x)$. The entropy term $H(Z,
F_\theta(X))$ can thus be written as
\begin{align}
H(Z, F_\theta(X)) = -\int p_0(z) \log p_\theta(z)dz = -\int p_0(z) \log \int p(x) p_\theta(z|x) \,dx \,dz.\label{eq:xe}
\end{align}
This cross entropy is minimized with respect to $\theta$ when
$p_\theta(z) = p_0(z)$ \citep{cover:thomas}.  Unfortunately, the
integral on the right-hand side of \eqref{eq:xe} cannot usually be
computed in closed form. We thus introduce a variational distribution
$q_\gamma (x|z)$ and by Jensen's inequality, we have
\begin{equation}
-\log p_\theta(z) = -\log \int p_\theta(z|x) p(x) \frac{q_\gamma (x|z)}{q_\gamma (x|z)} \,dx 
  \leq -\int q_\gamma(x|z) \log \frac{p_\theta(z|x)p(x)}{q_\gamma(x|z)} \,dx,\label{eq:naive}
\end{equation}
which we use to bound the cross-entropy in \eqref{eq:xe}. In
variational inference, strong parametric assumptions are typically
made on $q_\gamma$. Importantly, we here relax that assumption, instead
representing $q_\gamma$ implicitly as a deep generative model,
enabling us to learn very complex distributions. The variational
distribution $q_\gamma(x|z)$ plays exactly the same role as the
generator in a GAN, and for that reason, we will
parameterize $q_\gamma(x|z)$ as the output
of a stochastic neural network $G_\gamma(z)$.

In practice minimizing this bound is difficult if $q_\gamma$ is specified implicitly.
For instance, it is challenging to train a discriminator network 
that accurately estimates the unknown likelihood ratio $\log p(x) / q_\gamma(x|z)$, because $q_\gamma(x|z),$ as a conditional distribution, is much more peaked than the joint distribution $p(x)$, making it  too easy for
a discriminator to tell the two distributions apart. Intuitively, the discriminator in a GAN works well
when it is presented a
\emph{difficult} pair of distributions to distinguish.
To circumvent this problem, 
we write (see supplementary material)
\begin{align}
-\int p_0(z) \log p_\theta(z) 
& \leq \KL{q_\gamma(x|z) p_0(z)}{p_\theta(z|x)p(x)}
 - \Exp{\log p_0(z)}. \label{eq:ideal-kl_}
\end{align}
Here all expectations are taken with respect to the joint distribution $p_0(z) q_\gamma(x|z).$ 

Now, moving to the second term in \eqref{eq:intuition},
we define the reconstruction penalty as an expectation of the cost of autoencoding noise vectors, that is,
$\Exp{d(z, F_\theta(G_\gamma(z)))}.$
The function $d$ denotes
a loss function in representation space $\R^K$, such as $\ell_2$ loss and therefore the term is an autoencoder in representation space. To make this link
explicit, we expand the expectation, assuming
that we choose $d$ to be $\ell_2$ loss. This yields
$\Exp{d(z, F_\theta(x))} =  \int p_0(z) \int q_\gamma(x|z) \|z - F_\theta(x)\|^2 \,dx dz.$ Unlike a standard autoencoder, however, rather than taking a \emph{data item} as input and attempting to reconstruct it, we  autoencode a \emph{representation vector}. 
This makes a substantial difference in the interpretation and performance of the method, as we discuss in \autoref{sec:mode-reg}. For example, notice that we do not
include a regularization weight on
the autoencoder term in \eqref{eq:ideal-kl}, because Proposition 1 below says that this is not needed to recover
the data distribution.

Combining these two ideas, we obtain the final objective function
\begin{align}
\calO(\gamma, \theta) & = \KL{q_\gamma(x|z) p_0(z)}{p_\theta(z|x)p(x)}
 - \Exp{\log p_0(z)}
+ \Exp{d(z, F_\theta(x))}. \label{eq:ideal-kl}
\end{align}
Rather than minimizing the intractable $\calO_{\mathrm{entropy}}(\gamma,\theta)$, our goal in \acronym\ is to  minimize the upper bound $\calO$  with respect to $\gamma$
and $\theta$.
Indeed, if the networks
$F_\theta$ and $G_\gamma$ are sufficiently powerful, then if we succeed
in globally minimizing $\calO,$  we can guarantee that $q_\gamma$ recovers
the true data distribution. This statement is formalized in the following proposition.

\begin{proposition}
Suppose that there exist parameters $\theta^*, \gamma^*$ such that 
$\calO(\gamma^*, \theta^*) = H[p_0],$ where $H$ denotes Shannon entropy.
Then $(\gamma^*, \theta^*)$ minimizes $\calO$, and further 
\begin{equation*}
 p_{\theta^*}(z) := \int p_{\theta^*}(z|x) p(x) \,dx = p_0(z), \quad \mbox{ and } \quad
q_{\gamma^*}(x) := \int q_{\gamma^*}(x|z) p_0(z) \,dz = p(x).
\end{equation*}
\end{proposition}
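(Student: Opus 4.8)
The plan is to exploit the fact that $\calO$ is, by construction, the sum of a Kullback--Leibler divergence, the reconstruction penalty, and a term that is constant in $(\gamma,\theta)$ and equals exactly the Shannon entropy $H[p_0]$. First I would simplify the middle term: because $\log p_0(z)$ depends only on $z$ while the expectation is taken against the joint $p_0(z)\,q_\gamma(x|z)$, integrating out $x$ leaves $-\Exp{\log p_0(z)} = -\int p_0(z)\log p_0(z)\,dz = H[p_0]$, which is independent of the parameters. Next I would note that the two remaining terms are nonnegative: a KL divergence is always nonnegative, and the reconstruction penalty $\Exp{d(z, F_\theta(x))}$ is an expectation of a nonnegative loss (e.g.\ $\ell_2$). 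Together these give the pointwise lower bound $\calO(\gamma,\theta) \ge H[p_0]$ for every $(\gamma,\theta)$.

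With this lower bound in hand the first conclusion is immediate: since $(\gamma^*,\theta^*)$ attains $\calO(\gamma^*,\theta^*) = H[p_0]$, it meets the lower bound and hence is a global minimizer of $\calO$. Moreover, attaining equality forces each of the two nonnegative terms to vanish individually. In particular the KL term must equal zero, and by the standard fact that $\KL{\mu}{\nu}=0$ iff $\mu=\nu$ almost everywhere, this means the two joint densities coincide, i.e.\ $q_{\gamma^*}(x|z)\,p_0(z) = p_{\theta^*}(z|x)\,p(x)$ for almost every $(x,z)$.

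Finally I would recover the two marginal identities by integrating this joint equality. Integrating over $x$ collapses the left-hand side to $p_0(z)\int q_{\gamma^*}(x|z)\,dx = p_0(z)$ and the right-hand side to $\int p_{\theta^*}(z|x)\,p(x)\,dx = p_{\theta^*}(z)$, yielding $p_{\theta^*}(z) = p_0(z)$. Symmetrically, integrating over $z$ collapses the left-hand side to $\int q_{\gamma^*}(x|z)\,p_0(z)\,dz = q_{\gamma^*}(x)$ and the right-hand side to $p(x)$, yielding $q_{\gamma^*}(x) = p(x)$. This is exactly the pair of claimed equalities.

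I expect no serious obstacle: the whole argument is the nonnegativity-plus-tightness structure familiar from variational bounds. The only points requiring care are (i) verifying that the middle term is genuinely the parameter-free Shannon entropy, which hinges on $\log p_0$ being a function of $z$ alone, and (ii) being explicit that the reconstruction penalty plays \emph{no} role in the marginal-matching conclusions---it is the vanishing of the KL term alone that pins down both marginals, so the autoencoder term need only be nonnegative for the lower-bound step to go through. This is worth stating clearly, since it is precisely why no regularization weight on the reconstruction term is required to recover $p(x)$.
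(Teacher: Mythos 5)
Your proof is correct and follows essentially the same route as the paper's own argument: identify the entropy term as a parameter-free constant $H[p_0]$, use nonnegativity of the KL divergence and the reconstruction penalty to obtain the lower bound $\calO \geq H[p_0]$, deduce that equality forces the KL term to vanish so that $q_{\gamma^*}(x|z)\,p_0(z) = p_{\theta^*}(z|x)\,p(x)$, and integrate over $x$ and $z$ respectively to recover the two marginal identities. If anything, you are slightly more explicit than the paper in noting that \emph{each} nonnegative term must vanish individually and that the reconstruction term plays no role in the marginal-matching conclusion, which is a worthwhile clarification.
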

Because neural networks are universal approximators,
the conditions in the proposition can be achieved when the networks $G$ and $F$ are sufficiently powerful.

\subsection{Learning with Implicit Probability Distributions}

This subsection describes how to approximate $\calO$ when we have implicit representations
for $q_\gamma$ and $p_\theta$ rather than explicit densities. 
In this case, we cannot optimize $\calO$ directly, because 
the KL divergence in \eqref{eq:ideal-kl} depends on a density ratio which is unknown, both
because $q_\gamma$ is implicit and also because $p(x)$ is unknown. Following \cite{ali2016,donahue2016adversarial}, we estimate
this ratio using a discriminator network $D_\omega(x,z)$ 
which we will train to encourage
\begin{equation}
\label{eq:d_opt}
D_{\omega}(z,x) = \log \frac{q_\gamma(x|z)p_0(z)}{p_{\theta}(z | x) p(x)}.
\end{equation}
This will allow us to estimate $\calO$ as
\begin{align}
\label{eq:ohat}
\hat{\calO} (\omega, \gamma, \theta) = \frac{1}{N} \sum_{i=1}^N \mathcal{D}_\omega(z^i, x_g^i) + \frac{1}{N} \sum_{i=1}^N d(z^i, x_g^i),
\end{align}
where $(z^i,x_g^i) \sim p_0(z) q_\gamma(x|z)$.
In this equation, note that $x_g^i$ is a function of $\gamma$;
although we suppress this in the notation, we do take
this dependency into account in the algorithm.
We use an auxiliary objective function to estimate $\omega$.
As mentioned earlier, we omit the entropy term $-\Exp{\log p_0(z)}$ from
$\hat{\calO}$ as it is constant with respect to all parameters.
In principle, any method for density ratio estimation could be used here,
for example, see \cite{sugiyama:book,Gutmann2011b}. In this work, we will use the logistic regression loss, much as in other methods for deep adversarial training, such as GANs \cite{gan}, or for noise contrastive estimation \cite{gutmann12}.
We will train $D_\omega$ to distinguish samples from the joint distribution
$q_{\gamma}(x|z)p_{0}(z)$ from $p_{\theta}(z|x) p(x)$. The objective function for this is
\begin{equation}
\label{eq:o3}
\calO_{\lr}(\omega, \gamma, \theta) = -\Expb{\gamma}{\log\left(\sigma \left(D_\omega(z,x)\right)\right)}-  \Expb{\theta}{\log\left( 1-\sigma \left(D_\omega(z,x)\right)\right)},
\end{equation}
where $E_{\gamma}$ denotes expectation with respect
to the joint distribution $q_\gamma(x|z) p_0(x)$ 
and $E_{\theta}$ with respect to $p_\theta(z|x)p(x)$.
We write $\hat{\calO}_{\lr}$ to indicate the Monte Carlo estimate of $\calO{\lr}$.
Our learning algorithm optimizes this pair of equations with respect to $\gamma, \omega, \theta$ using stochastic gradient descent. In particular, the algorithms aim to
find a simultaneous solution to
$\min_{\omega} \hat{\calO}_{\lr} (\omega, \gamma, \theta)$
and $\min_{\theta,\gamma} \hat{\calO} (\omega, \gamma, \theta)$.
  This training procedure is described in Algorithm~\ref{alg:veegan}.
When this procedure converges, we will have that
$\omega^* = \arg\min_\omega \calO_{\lr}(\omega, \gamma^*, \theta^*)$,
which means that $D_{\omega^*}$ has converged to the likelihood ratio
\eqref{eq:d_opt}. Therefore $(\gamma^*, \theta^*)$ have also
converged to a minimum of $\calO$. 

We also found that pre-training the reconstructor network on samples from $p(x)$ helps in some cases. 

\begin{algorithm}[tb]
\caption{\acronym training\label{alg:veegan}}
\begin{algorithmic}[1]
\While{\text{not converged}}
\For{$i \in \{1\ldots N\}$}
\State Sample $z^i \sim p_0(z)$ 
\State Sample $x_g^i \sim q_\gamma(x|z_i)$ 
\State Sample $x^i \sim p(x)$ 
\State Sample $z_g^i \sim p_\theta(z_g|x_i)$ 
\EndFor
\State $g_\omega \gets - \nabla_\omega \frac{1}{N}\sum_i \log \sigma \left(D_\omega(z^i,x_g^i)\right) + \log\left( 1-\sigma \left(D_\omega(z_g^i,x^i)\right)\right) $
\Comment{Compute $\nabla_\omega \hat{\calO}_{\lr}$ }
\State
\State $g_\theta \gets  \nabla_\theta \frac{1}{N}\sum_i d(z^i, x_g^i)$
\Comment{Compute $\nabla_\theta \hat{\calO}$ }
\State
\State $g_\gamma \gets \nabla_\gamma \frac{1}{N}\sum_i D_\omega(z^i,x_g^i) +  \frac{1}{N}\sum_i d(z^i, x_g^i)$
\Comment{Compute $\nabla_\gamma \hat{\calO}$ }
\State
\State $\omega \gets \omega - \eta g_\omega$; $\theta \gets \theta - \eta g_\theta$; $\gamma \gets \gamma - \eta g_\gamma$
\Comment{Perform SGD updates for $\omega,\theta$ and $\gamma$}
\EndWhile
\end{algorithmic}
\end{algorithm}

\section{Relationships to Other Methods}

An enormous amount of attention has been devoted recently
to improved methods for GAN training,
and we compare ourselves to the most closely related work in detail.

\paragraph{BiGAN/Adversarially Learned Inference}
BiGAN \citep{donahue2016adversarial} and Adversarially Learning Inference (ALI) \citep{ali2016}
are two essentially identical recent adversarial methods for learning both a deep generative network
$G_\gamma$ and a reconstructor network $F_\theta$. 
Likelihood-free variational inference (LFVI) \citep{tran17dim} extends this
idea to a hierarchical Bayesian setting. Like \acronym,
all of these methods also
use
a discriminator $D_\omega(z,x)$ on the joint $(z,x)$ space.  However, the \acronym objective function $\calO(\theta, \gamma)$
provides significant benefits over
the logistic regression loss over $\theta$ and $\gamma$ that is used in ALI/BiGAN, or the KL-divergence used in LFVI.

In all of these methods, just as in vanilla GANs,
the objective function
depends on $\theta$ and $\gamma$ only
via the output $D_\omega(z,x)$ of the discriminator; therefore, if there is a mode of data space in which  $D_\omega$ is insensitive
to changes in $\theta$ and  $\gamma$,
there will be mode collapse.
In \acronym, by contrast, the reconstruction term does not depend on the discriminator,
and so can provide learning signal to $\gamma$ or $\theta$ even when
the discriminator is constant.
We will show in \autoref{sec:experiments} that indeed \acronym is dramatically less prone
to mode collapse than ALI.

\paragraph{InfoGAN} While differently motivated to obtain disentangled representation of the data, InfoGAN also uses a latent-code reconstruction based penalty in its cost function. But unlike \acronym, only a part of the latent code is reconstructed in InfoGAN. Thus, InfoGAN is similar to VEEGAN in that it also includes an autoencoder over the latent codes, but the key difference is that InfoGAN does not also train the reconstructor network on the true data distribution. We suggest that this may be the reason that InfoGAN was observed to require some of the same stabilization tricks as vanilla GANs, which are not required for VEEGAN.

\paragraph{Adversarial Methods for Autoencoders}
\label{sec:mode-reg}
A number of other recent methods have been proposed that combine adversarial methods 
and autoencoders, whether by explicitly regularizing
the GAN loss with an autoencoder loss
\citep{Che2017ModeRG,larsen2015autoencoding},
or by alternating optimization between
the two losses \citep{makhzani2015adversarial}.
In all of these methods, the autoencoder is over
images, i.e., they incorporate a loss function 
of the form $\lambda d(x, G_\gamma (F_\theta (x))),$
where $d$ is a loss function over images, such as
pixel-wise $\ell_2$ loss, and $\lambda$
is a regularization constant.
Similarly, variational autoencoders
\cite{kingma14,rezende2014stochastic}
also autoencode images rather than noise vectors.
Finally, the adversarial variational Bayes (AVB) 
 \citep{avb} is an adaptation of VAEs
to the case where the posterior distribution $p_\theta(z|x)$
is implicit, but the data distribution $q_\gamma(x|z)$, must be explicit, unlike in our work.

Because these methods autoencode data points,
they share a crucial disadvantage.
Choosing a good loss function $d$
over natural images can be problematic.
For example, it has been commonly observed that 
minimizing an $\ell_2$ reconstruction loss
on images can lead to blurry images.
Indeed, if choosing a loss function over images were easy,
we could simply train an autoencoder and dispense
with adversarial learning entirely.
By contrast,  in \acronym we autoencode the noise vectors $z$, and 
\emph{choosing a good loss function for a noise
autoencoder is easy.} The noise vectors $z$
are drawn from a standard normal distribution,
using an $\ell_2$ loss on $z$ is entirely natural
--- and does not, as we will show in \autoref{sec:experiments}, result in blurry images compared
to purely adversarial methods. 

\section{Experiments}\label{sec:experiments}

Quantitative evaluation of GANs is problematic because implicit distributions do not have a tractable likelihood term to quantify generative accuracy. Quantifying mode collapsing is also not straightforward, except  in the case of synthetic data with known modes. 
For this reason, several indirect metrics have recently been
proposed to evaluate GANs specifically for their mode collapsing behavior \citep{metz2016unrolled,Che2017ModeRG}. However, none of these metrics are reliable on their own and therefore 
we need to compare across a number of different methods.
Therefore in this section we evaluate \acronym on several synthetic and real datasets and compare its performance against vanilla GANs \citep{gan}, Unrolled GAN \citep{metz2016unrolled} and ALI \citep{ali2016} on five different metrics. Our results strongly suggest that \acronym does indeed resolve mode collapse in GANs
to a large extent. 
Generally, we found that \acronym performed well with default hyperparameter values, 
so we did not tune these.  Full details are provided in the supplementary material.

\subsection{Synthetic Dataset}

Mode collapse can be accurately measured on synthetic datasets, since the true  distribution and its modes are known. 
In this section we compare all four competing methods on three synthetic datasets of increasing difficulty: 
a mixture of eight 2D Gaussian distributions arranged in a ring, a mixture of twenty-five 
2D Gaussian distributions arranged in a grid \footnote{Experiment follows \cite{ali2016}. Please note that for certain settings of parameters, vanilla GAN can also recover all 25 modes, as was pointed out to us by Paulina Grnarova.} and a mixture of ten 
700 dimensional Gaussian distributions embedded in a 1200 dimensional space.  This mixture arrangement was chosen to mimic the higher dimensional manifolds of natural images.
All of the mixture components were isotropic Gaussians.
For a fair comparison of the different learning methods
for GANs, we use the same 
network architectures for the reconstructors and the generators for all methods,
namely, fully-connected MLPs with two hidden layers. For the discriminator we use a two layer MLP without  dropout or normalization layers. \acronym method works for both deterministic and stochastic generator networks. To allow for the generator to be a stochastic map we add an extra dimension of noise to the generator input that is not reconstructed. 

To quantify the mode collapsing behavior we report two metrics:  We sample points from the generator network, and count
a sample as \emph{high quality},
if it is within three standard deviations of the nearest mode, 
for the 2D dataset, or within 10 standard deviations of the nearest mode, for the 1200D dataset.
Then, we report the \emph{number of modes captured} as the number
of mixture components whose mean is nearest to at least
one high quality sample.
We also report the percentage of high quality samples
as a measure of sample quality. We generate $2500$ samples from each trained model and average the numbers over five runs. 
For the unrolled GAN, we set the number of unrolling steps to five as suggested in the authors' reference implementation.

As shown in \autoref{tab:syn},  \acronym  captures the greatest number of modes on all the synthetic datasets, while consistently generating higher quality samples. 
This is visually apparent in Figure \ref{fig:ring}, which plot the generator distributions for each 
method; the generators learned by \acronym are sharper
and closer to the true distribution. This
figure also shows why it is important to measure sample quality
and mode collapse simultaneously, as either alone can be 
misleading.
For instance, the GAN on the 2D ring has $99.3\%$ sample quality, but this is simply because the GAN collapses all
of its samples onto one mode (Figure \ref{fig:gan_ring}). On the other extreme,
the unrolled GAN on the 2D grid captures almost all the modes in 
the true distribution,
but this is simply because 
that it is generating highly dispersed samples (Figure \ref{fig:unrolled_grid2}) that do not accurately represent the true distribution, hence the low sample quality. All methods had approximately the same running time, except for unrolled GAN,
which is a few orders of magnitude slower due to the unrolling overhead.

\begin{table}
\centering
\caption{Sample quality and degree of mode collapse
on mixtures of Gaussians.  \acronym consistently
captures the highest number of modes and produces better samples.\label{tab:syn}}\vspace{1ex}

\resizebox{\textwidth}{!}
{
\begin{tabular}{@{}lcccccc@{}}
\toprule
\multirow{2}{*}{}     & \multicolumn{2}{c}{\textbf{2D Ring}}                                                                                                          & \multicolumn{2}{c}{\textbf{2D Grid}}                                                                                                           & \multicolumn{2}{c}{\textbf{1200D Synthetic}}                                                                                                   \\  \cmidrule(r{1em}l{1em}){2-3}\cmidrule(r{1em}l{1em}){4-5}\cmidrule(r{1em}l{1em}){6-7}
                      & \textbf{\begin{tabular}[c]{@{}c@{}}Modes \\ (Max 8)\end{tabular}} & \textbf{\begin{tabular}[c]{@{}c@{}}\% High Quality\\ Samples\end{tabular}} & \textbf{\begin{tabular}[c]{@{}c@{}}Modes \\ (Max 25)\end{tabular}} & \textbf{\begin{tabular}[c]{@{}c@{}}\% High Quality\\ Samples\end{tabular}} & \textbf{\begin{tabular}[c]{@{}c@{}}Modes \\ (Max 10)\end{tabular}} & \textbf{\begin{tabular}[c]{@{}c@{}}\% High Quality\\ Samples\end{tabular}} \\ \midrule
\textbf{GAN}          & 1                                                                 & 99.3                                                                       &3.3                                                                    &0.5                                                                            &1.6                                                                    &2.0                                                                            \\ 
\textbf{ALI}          & 2.8                                                               & 0.13                                                                       & 15.8                                                               & 1.6                                                                        & 3                                                                  & 5.4                                                                        \\ 
\textbf{Unrolled GAN} & 7.6                                                               & 35.6                                                                       & 23.6                                                               & 16                                                                         & 0                                                                  & 0.0                                                                         \\ 
\textbf{\acronym}     & \textbf{8}                                                        & \textbf{52.9}                                                              & \textbf{24.6}                                                      & \textbf{40}                                                                & \textbf{5.5}                                                       & \textbf{28.29}                                                             \\ \bottomrule
\end{tabular}
}
\end{table}

\subsection{Stacked MNIST}

Following \citep{metz2016unrolled}, we evaluate our methods on the stacked MNIST dataset, a variant of the MNIST data specifically
designed to increase the number of discrete modes.
The data is synthesized by stacking three randomly sampled MNIST digits along the color channel resulting in a 28x28x3 image.
We now expect $1000$ modes in this data set,
corresponding to the number of possible triples of digits.
 
Again, to focus the evaluation on the difference in the learning algorithms, we use the same generator architecture
for all methods.
In particular, the generator architecture is an off-the-shelf standard implementation\footnote{\url{https://github.com/carpedm20/DCGAN-tensorflow}} of DCGAN \citep{radford2015unsupervised}.

For Unrolled GAN, we used a standard implementation of the DCGAN discriminator network. For  ALI and \acronym, the discriminator 
architecture is described
in the supplementary material. 
For the reconstructor in ALI and \acronym,
we use a simple two-layer MLP for the reconstructor without any regularization layers. 

Finally, for \acronym we  pretrain the reconstructor by taking a few 
stochastic gradient steps with respect to $\theta$ before running Algorithm
\ref{alg:veegan}. For all methods other than \acronym, 
we use the enhanced generator loss function suggested in \citep{gan}, since we were not able to get sufficient learning signals for the generator without it.
\acronym did not require this adjustment for successful training.

As the true locations of the modes in this data 
are unknown, 
the number of modes are estimated using a trained classifier as described originally in \citep{Che2017ModeRG}. We used a total of $26000$ samples for all the models and the results are averaged over five runs. As a measure of quality, 
following \citep{metz2016unrolled} again, we also report the KL divergence between the generator distribution and the data distribution.
As reported in Table \ref{tab:mnist}, \acronym  not only  captures the most  modes, it  consistently matches the data distribution more closely than any other method. Generated samples from each of the models are shown in the supplementary material.

\begin{table}
\centering
\begin{tabular}{@{}ccc|r@{ $\pm$ }l@{}}\toprule
\multicolumn{1}{l}{\multirow{2}{*}{}} & \multicolumn{2}{c}{\textbf{Stacked-MNIST}} & \multicolumn{2}{c}{\textbf{CIFAR-10}}                                  \\   \cmidrule(r{1em}l{1em}){2-3} 
               & {Modes (Max 1000)} & \multicolumn{1}{c}{KL} & \multicolumn{2}{c}{{IvOM}}                                \\ \midrule
\textbf{DCGAN}                          & 99                    & 3.4   & 0.00844 & 0.002       \\ 
\textbf{ALI}                            & 16                        & 5.4 & \textbf{0.0067} & 0.004                                                  \\ 
\textbf{Unrolled GAN}                   & 48.7                       & 4.32  & 0.013 & 0.0009                                                     \\
\textbf{\acronym}                       & \textbf{150}             & \textbf{2.95}      & \textbf{0.0068} & 0.0001                                    \\ \bottomrule
\end{tabular} \vspace{1ex}
\caption{Degree of mode collapse, measured by modes captured and the inference via optimization measure (IvOM), and sample quality (as measured by KL)
on Stacked-MNIST and CIFAR.  \acronym captures the most modes and also achieves the highest quality.}
\label{tab:mnist}
\label{tab:cifar}
\end{table}

\subsection{CIFAR}

Finally, we evaluate the learning methods on the CIFAR-10 dataset, a well-studied and diverse dataset of natural images. 
We use the same discriminator, generator, and reconstructor
architectures as in the previous section. 
However, the previous mode collapsing metric 
is inappropriate here, owing to CIFAR's greater diversity.
Even within one of the 10 classes of CIFAR,
the intra-group diversity is very high compared to any of the  10 classes of MNIST. Therefore, for CIFAR it is inappropriate to assume, as the metrics of the previous subsection do, that each
labelled class corresponds to a single mode of the data distribution.

Instead, we use a metric introduced by \citep{metz2016unrolled} which we will call the inference via optimization metric (IvOM). The idea behind this metric is to compare real images from the test
set to the nearest generated image; if the generator suffers
from mode collapse, then there will be some images for
which this distance is large.
To quantify this,
we sample a real image $x$ from the test set, and find
the closest image that the GAN is capable of generating, i.e.  optimizing the $\ell_2$ loss between $x$ and generated image $G_\gamma(z)$ with respect to $z$. 
 If a method consistently attains low MSE, then it can be assumed to be capturing more modes than the ones which attain a higher MSE. 
As before, this metric can still be fooled by highly dispersed generator distributions, and also the $\ell_2$ metric may favour
generators that produce blurry images. Therefore we will also
evaluate sample quality visually.
All numerical results have been averaged over five runs.
Finally, to evaluate whether the noise autoencoder
in \acronym is indeed superior to a more traditional data autoencoder, we compare to a variant, which we call \acronym+DAE, that uses
a data autoencoder instead, by simply replacing $d(z, F_\theta(x))$
in $\calO$ with a data loss $\|x - G_\gamma(F_\theta(x)))\|^2_2$.


As shown in Table \ref{tab:cifar}, ALI and \acronym achieve the best IvOM.  
Qualitatively, however, generated samples from \acronym
seem better than other methods. In particular, the samples from \acronym+DAE are
meaningless. Generated samples from \acronym are shown in \autoref{fig:cifar_generated}; samples from other methods
are shown in the supplementary material. As another illustration of this, Figure \ref{fig:cifar} illustrates the IvOM metric, by showing
the nearest neighbors to real images that each of the GANs
were able to generate; in general, the nearest neighbors
will be more semantically meaningful than randomly generated
images. We omit \acronym+DAE from this table because it did not produce plausible images.
Across the methods, we see in \autoref{fig:cifar}  that
\acronym captures small details, such as the face of the poodle, that other methods miss.


\begin{figure}[!htb]
 \caption{Density plots of the true data and generator distributions from  different GAN methods
 trained on mixtures of Gaussians arranged in a ring (top) or a grid (bottom).\label{fig:ring}\label{fig:grid}}
\begin{subfigure}{0.18\textwidth}
  \includegraphics[width=\linewidth]{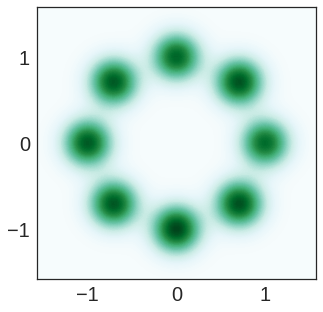}
  \caption{True Data\label{fig:true_ring}}
\end{subfigure} 
\begin{subfigure}{0.18\textwidth}
  \includegraphics[width=\linewidth]{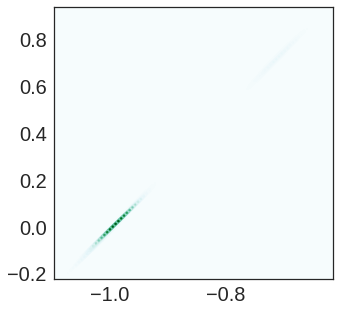}
  \caption{GAN\label{fig:gan_ring}}
\end{subfigure}
\begin{subfigure}{0.18\textwidth}
  \includegraphics[width=\linewidth]{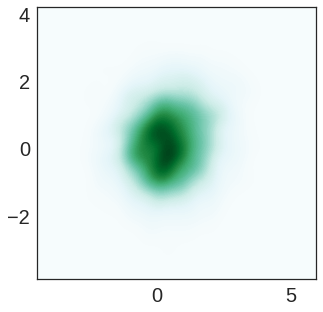}
  \caption{ALI\label{fig:ali_ring}}
\end{subfigure}
\begin{subfigure}{0.18\textwidth}
  \includegraphics[width=\linewidth]{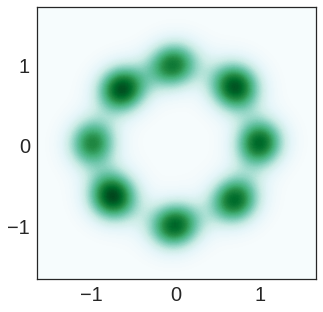}
  \caption{Unrolled\label{fig:unrolled_ring}}
\end{subfigure}
\begin{subfigure}{0.18\textwidth}%
  \includegraphics[width=\linewidth]{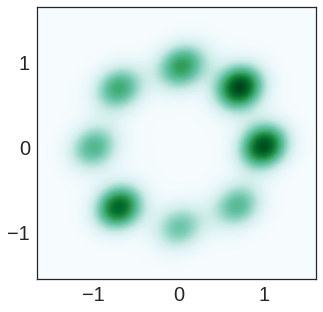}
  \caption{\acronym\label{fig:veegan_ring}}
\end{subfigure} \\


\begin{subfigure}{0.18\textwidth}
  \includegraphics[width=\linewidth]{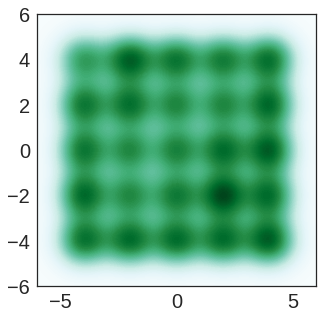}
  \caption{True Data\label{fig:true_grid2}}
\end{subfigure} 
\begin{subfigure}{0.18\textwidth}
  \includegraphics[width=\linewidth]{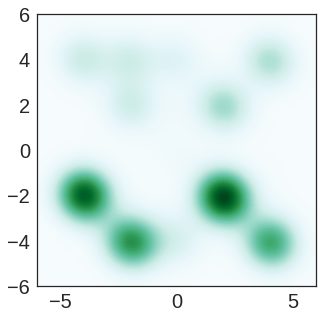}
  \caption{GAN\label{fig:gan_grid2}}
\end{subfigure}
\begin{subfigure}{0.18\textwidth}
  \includegraphics[width=\linewidth]{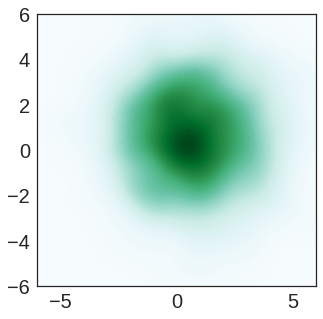}
  \caption{ALI\label{fig:ali_grid2}}
\end{subfigure}
\begin{subfigure}{0.18\textwidth}
  \includegraphics[width=\linewidth]{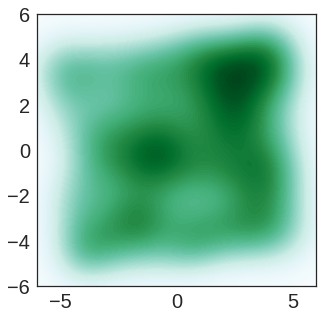}
  \caption{Unrolled\label{fig:unrolled_grid2}}
\end{subfigure}
\begin{subfigure}{0.18\textwidth}%
  \includegraphics[width=\linewidth]{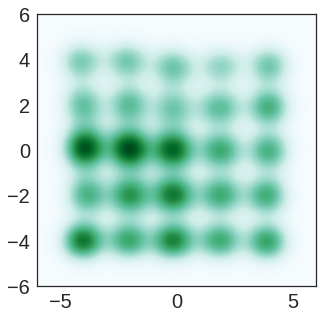}
  \caption{\acronym\label{fig:veegan_grid2}}
\end{subfigure}
\end{figure}



\begin{figure}[!htb]
\centering
\caption{Sample images from GANs trained on CIFAR-10. Best viewed magnified on screen.}\label{fig:cifar}
\begin{subfigure}{3.25in}
  \includegraphics[width=0.475\linewidth]{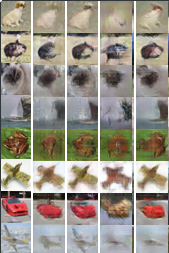}\hspace{0.5em}%
  \includegraphics[width=0.475\linewidth]{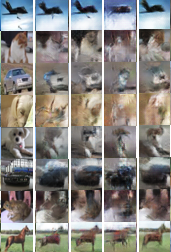}\vspace{1ex}
\caption{Generated samples nearest to real images from CIFAR-10. In each of the two panels, the first column are real images, followed by 
the nearest images from DCGAN, ALI, Unrolled GAN, and \acronym respectively. }
  \end{subfigure}\hspace{2em}%
\begin{subfigure}{1.9in}
    \includegraphics[width=\linewidth]{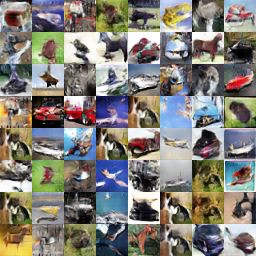}\vspace{1ex}
      \caption{Random samples from generator of \acronym trained on CIFAR-10.}\label{fig:cifar_generated}
  \end{subfigure}
\end{figure}

\section{Conclusion}
We have presented \acronym, a new training principle for GANs that 
combines a KL divergence in the joint space of representation
and data points with an autoencoder over the representation space,
motivated by a variational argument.
Experimental results on synthetic data and real images show that our
approach is much more effective than several state-of-the art GAN methods at avoiding mode collapse
while still generating good quality samples.

\section*{Acknowledgement}
We thank Martin Arjovsky, Nicolas Collignon, Luke Metz, Casper Kaae Sønderby, Lucas Theis, Soumith Chintala, Stanisław Jastrzębski, Harrison Edwards, Amos Storkey and Paulina Grnarova for their helpful comments. We would like to specially thank Ferenc Huszár for insightful discussions and feedback.

\bibliography{nips} \bibliographystyle{icml2017}

\ifdraft{\appendix
\section{Proof of Lower Bound}

This appendix completes the proof of the bound in the text that
\begin{align}
-\int p_0(z) \log p_\theta(z) 
& \leq \KL{q_\gamma(x|z) p_0(z)}{p_\theta(z|x)p(x)}
 - \Exp{\log p_0(z)}\label{eq:appendix1} 
\end{align}
where $p_0$ is the standard normal density, and $p_\theta(z) = \int p_\theta(z|x) p(x) \,dx$.
As described in the text, introducing a a variational distribution $q_\gamma (x|z)$ yields
\begin{equation}
-\int p_0(z) \log p_\theta(z) \,dz \leq -\iint p_0(z) q_\gamma(x|z) \log \frac{p_\theta(z|x)p(x)}{q_\gamma(x|z)} \,dx \,dz.\label{eq:jensen}
\end{equation}
Starting from \eqref{eq:jensen}, we obtain a new upper bound by adding a trivial KL divergence to the right hand side of the above inequality
\begin{align}
-\int p_0(z) \log p_\theta(z) \,dz
  &\leq -\iint p_0(z) q_\gamma(x|z) \log \frac{p_\theta(z|x)p(x)}{q_\gamma(x|z)} \,dx \,dz \nonumber\\
  &= \iint p_0(z) q_\gamma(x|z) \log \frac{q_\gamma(x|z)}{p_\theta(z|x)p(x)} \,dx \,dz + \int p_0(z) \log \frac{p_0(z)}{p_0(z)} \,dz \label{eq:top-ineq}
\end{align}
Now for the upper term in the KL, we have that
\begin{align*}
	\int p_0(z) \log {p_0(z)} \,dz = \int p_0(z) \log {p_0(z)} \left(\int q_\gamma(x|z) \,dx \right)\,dz 
	 = \iint p_0(z) q_\gamma(x|z) \log {p_0(z)} \,dx \,dz.
\end{align*}
Combining with \eqref{eq:top-ineq} yields
\begin{align} 
H(Z, F_\theta(X)) &\leq \iint p_0(z) q_\gamma(x|z) \log \frac{q_\gamma(x|z)}{p_\theta(z|x)p(x)} \,dx \,dz
        + \iint p_0(z) q_\gamma(x|z) \log {p_0(z)} \,dx \,dz \nonumber\\
        & \qquad\quad - \int p_0(z) \log p_0(z) \,dz \nonumber\\
        &= \iint p_0(z) q_\gamma(x|z) \log \frac{q_\gamma(x|z) p_0(z)}
        {p_\theta(z|x)p(x)} \,dx \,dz
    - \int p_0(z) \log p_0(z) \,dz \nonumber\\
    &=  \KL{q_\gamma(x|z) p_0(z)}{p_\theta(z|x)p(x)} - \int p_0(z) \log p_0(z) \,dz, \nonumber
\end{align}	
which completes the proof.

\section{Proof of Proposition 1}
\begin{proposition}
Suppose that there exist parameters $\theta^*, \gamma^*$ such that 
$\calO(\gamma^*, \theta^*)= H[p_0],$ where $H$ denotes Shannon entropy. Then $(\gamma^*, \theta^*)$ minimizes $\calO$, and we further have that
\begin{align*}
 p_{\theta^*}(z) &:= \int p_{\theta^*}(z|x) p(x) \,dx = p_0(z) \\
q_{\gamma^*}(x) &:= \int q_{\gamma^*}(x|z) p_0(z) \,dz = p(x).
\end{align*}
\end{proposition}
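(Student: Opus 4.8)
The plan is to show that $H[p_0]$ is a global lower bound for $\calO$ which is attained only when both of its non-negative constituent terms vanish, and then to read the two marginal identities directly off the vanishing KL divergence. First I would observe that the middle term of \eqref{eq:ideal-kl} is a constant equal to the entropy of $p_0$: since every expectation is taken over the joint $p_0(z)\,q_\gamma(x|z)$ and $q_\gamma(x|z)$ integrates to one in $x$, the term collapses to $-\Exp{\log p_0(z)} = -\int p_0(z)\log p_0(z)\,dz = H[p_0]$, independent of $\gamma$ and $\theta$. This lets me rewrite the objective as
\begin{equation*}
\calO(\gamma,\theta) = \KL{q_\gamma(x|z)\,p_0(z)}{p_\theta(z|x)\,p(x)} + H[p_0] + \Exp{d(z, F_\theta(x))}.
\end{equation*}

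Next I would invoke non-negativity of the two remaining terms. A KL divergence is always non-negative, and since $d$ is a genuine loss (e.g. $\ell_2$), so is $\Exp{d(z, F_\theta(x))}$. Hence $\calO(\gamma,\theta) \geq H[p_0]$ for every $\gamma,\theta$, so the hypothesis $\calO(\gamma^*,\theta^*) = H[p_0]$ says the bound is saturated. This immediately yields that $(\gamma^*,\theta^*)$ is a global minimizer, and it further forces both non-negative terms to be zero at the optimum, in particular $\KL{q_{\gamma^*}(x|z)\,p_0(z)}{p_{\theta^*}(z|x)\,p(x)} = 0$.

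Finally I would extract the marginals. Because a KL divergence vanishes if and only if its two arguments coincide almost everywhere, I obtain the pointwise identity $q_{\gamma^*}(x|z)\,p_0(z) = p_{\theta^*}(z|x)\,p(x)$ between joint densities on $\R^K\times\R^D$. Integrating this equation over $z$ and using $\int p_{\theta^*}(z|x)\,dz = 1$ gives $q_{\gamma^*}(x) = p(x)$, while integrating over $x$ and using $\int q_{\gamma^*}(x|z)\,dx = 1$ gives $p_{\theta^*}(z) = p_0(z)$, which are precisely the two claimed conclusions. The one step worth stating carefully --- the main obstacle, such as it is --- is the opening identification of the constant term with $H[p_0]$ together with the non-negativity of the reconstruction term; once the objective is written as ``$\mathrm{KL} + H[p_0] + \text{(reconstruction)}$'' with both variable terms non-negative, the rest follows from the elementary characterization of equality in the KL divergence and Fubini's theorem for the two marginalizations.
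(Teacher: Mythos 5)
Your proposal is correct and follows essentially the same argument as the paper's proof: identify the middle term as the constant $H[p_0]$, use non-negativity of the KL divergence and the reconstruction term to get the lower bound $\calO \geq H[p_0]$, conclude the KL term vanishes at $(\gamma^*,\theta^*)$ so the joint densities coincide, and marginalize over $z$ and $x$ to obtain the two claimed identities.
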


\begin{proof}
From information theory, we know that
$ \KL{q_\gamma(x|z) p_0(z)}{p_\theta(z|x)p(x)} \geq 0.$
Additionally, we have that $\Exp{d(z, F_\theta(x))} \geq 0,$. Moreover, by definition of $\Exp{}$ in the proposition,
\begin{align*}
-\Exp{\log p_0(z)}&= -\iint p_0(z) q_\gamma(x|z) \log p_0(z) \,dz dx = -\int p_0(z) \log p_0(z) \,dz  \int q_\gamma(x|z) \, dx \\
&= -\int p_0(z) \log p_0(z) \,dz,
\end{align*}
which is the definition of the Shannon entropy $H[p_0]$ of $p_0$.

This implies that
\begin{align*}
\calO(\gamma, \theta) &=  \KL{q_\gamma(x|z) p_0(z)}{p_\theta(z|x)p(x)}
 - \Exp{\log p_0(z)}
+ \Exp{d(z, F_\theta(x))} \\
&\geq - \Exp{\log p_0(z)} \\ &= H[p_0].
\end{align*}
This bound is attained with equality when $q_\gamma(x|z) p_0(z) = p_\theta(z|x)p(x)$,
and when $F_\theta$ inverts $G_\gamma$ on the data distribution, i.e.,
when  $F_\theta(G_\gamma(z)) = z$
for all $z$. (Note that this
statement does not require
$G$ to be invertible outside
of its range.)

Now, if $\calO(\gamma^*, \theta^*) = H[p_0],$ subtracting the entropy
from both sides implies that
$\KL{q_\gamma(x|z) p_0(z)}{p_\theta(z|x)p(x)} = 0.$
Because the optimum of the KL divergence  is unique, we then have that  $q_{\gamma^*}(x|z) p_0(z) = p_{\theta^*}(z|x)p(x).$

Integrating both sides over $x$ yields the first equality in the proposition,
and integrating over $z$ yields the second.
\end{proof}

\section{Discriminator Architecture for ALI and \acronym}
When using ALI and VEEGAN, the original DCGAN discriminator needs to be augmented in order allow it to operate on pairs of images and noise vectors.
In order to achieve this, we flatten the final convolutional layer of DCGAN's discriminator and concatenate it with the input noise vector. Afterwards, we run the concatenation through a hidden layer, and then compute $D_\omega(z,x)$ through a linear transformation.

\begin{table}[h]
\centering
\caption{ALI and \acronym Discriminator Architecture.}
\label{tab:arch}
\begin{tabular}{|c|c|c|c|}
\hline
\multicolumn{1}{|l|}{\textbf{Operation}} & \textbf{\#Output} & \multicolumn{1}{l|}{\textbf{BN?}} & \multicolumn{1}{l|}{\textbf{Activation}} \\ \hline
{$D_\omega(x)$}                            & \multicolumn{3}{l|}{}                                                                            \\ \hline
Conv                                     & 64                & False                             & Leaky ReLU                               \\ \hline
Conv                                     & 128               & True                              & Leaky ReLU                               \\ \hline
Conv                                     & 256               & True                              & Leaky ReLU                               \\ \hline
Conv                                     & 512               & True                              & Leaky ReLU                               \\ \hline
Flatten                                  & -                 & -                                 & -                                        \\ \hline
\textbf{$\sigma(D_\omega(z,x))$}                          & \multicolumn{3}{l|}{Concatenate $D_\omega(x)$ and $z$ along the first axis.}                      \\ \hline
Fully Connected                          & 512               & False                             & Leaky ReLU                               \\ \hline
Fully Connected                          & 1                 & False                             & Sigmoid                                  \\ \hline
\end{tabular}
\end{table}

\section{Inference}
While not the focus of this work, our method can also be used for inference as in the case of ALI and BiGAN models. Figure \ref{fig:ali_inf} shows an example of inference on MNIST. The top row samples are from the dataset. We extract the latent representation vector for each of the real images by running them through the trained reconstructor and then use the resulting vector in the generator to get the generated samples shown in the bottom row of the figure.
\begin{figure}
  \includegraphics[width=\linewidth]{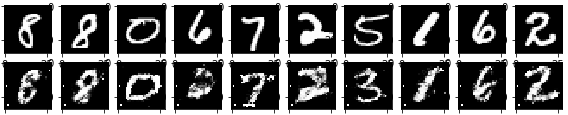}
  \caption{\acronym method can be used like ALI to perform inference. The means output from the reconstructor network for the real images in the top row are used as the latent features to samples the generated images in the bottom row. \AS{Either include reconstructions for other methods, or cut this figure entirely.
  What does this demonstrate that the other figures do not?}}
  \label{fig:ali_inf}
\end{figure}

\section{Adversarial Methods for Autoencoders}
In order to quantify contrast the effect of autoencoding of noise in \acronym with autoencoding of data in DAE methods \citep{Che2017ModeRG,larsen2015autoencoding} we train DAE version of \acronym by simply using the reconstructor network as an inference network. As mentioned before, careful tuning of the weighing parameter $\lambda$ is needed to ensure that the $\ell_2$ loss is only working as a regularizer. Therefore, we run a parameter sweep for $\lambda$. As shown in figure \ref{fig:dae_cifar} we were not able to obtain any meaningful images for any of the tested values.   
\AS{We need a sentence or two saying why we do this}

\AS{Suggest we cut Figure 7; I cannot see how it provides
evidence for any proposition. Also, "without $\calO$"
is unclear; is this just $\lambda = 0$?}

\AS{Capture in Figure 8: Are you sure the l2 loss is dominating
or are you guessing from the images? If the latter, I would cut 
this. We don't want to speculate without being very
clear that is what we are doing. And here, it's unnecessary
to speculate. Just say the images are bad.}


\begin{figure}[!htb]
\caption{CIFAR 10 samples from GANs with data Autoencoders. We did a parameter sweep over the value of $\lambda$ but were unable to generate any meaningful images for any of the values. Figure \ref{fig:ng} is generated entirely from the $\ell_2$ loss.}\label{fig:dae_cifar}
\begin{subfigure}{0.24\textwidth}
  \includegraphics[width=\linewidth]{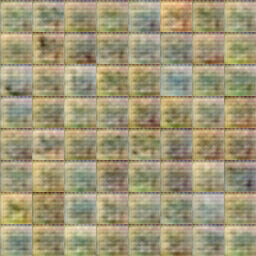}
  \caption{$\lambda=0.007$}\label{fig:7}
\end{subfigure} \hfill
\begin{subfigure}{0.24\textwidth}
  \includegraphics[width=\linewidth]{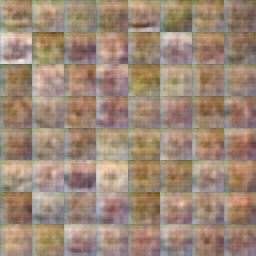}
  \caption{$\lambda=0.01$}\label{fig:1}
\end{subfigure} \hfill
\begin{subfigure}{0.24\textwidth}
  \includegraphics[width=\linewidth]{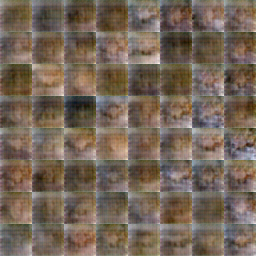}
  \caption{$\lambda=0.05$}\label{fig:5}
\end{subfigure} \hfill
\begin{subfigure}{0.24\textwidth}
  \includegraphics[width=\linewidth]{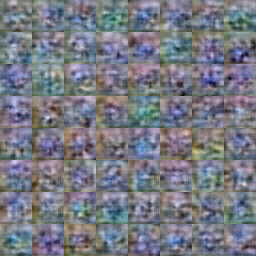}
  \caption{Only $\ell_2$}\label{fig:ng}
\end{subfigure}
\end{figure}

\section{Stacked MNIST Qualitative Results}
Qualitative results from the Stacked MNIST dataset for all the 4 methods.
\begin{figure}[!htb]
\caption{Samples from trained models for Stacked MNIST dataset.}\label{fig:sm}
\begin{subfigure}{0.18\textwidth}
  \includegraphics[width=\linewidth]{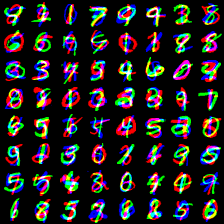}
  \caption{True Data}\label{fig:true_sm}
\end{subfigure} 
\begin{subfigure}{0.18\textwidth}
  \includegraphics[width=\linewidth]{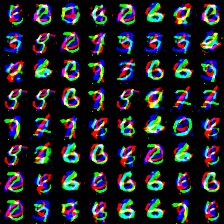}
  \caption{DCGAN}\label{fig:gan_sm}
\end{subfigure}
\begin{subfigure}{0.18\textwidth}
  \includegraphics[width=\linewidth]{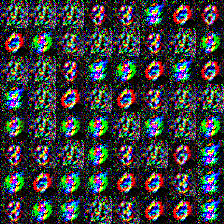}
  \caption{ALI}\label{fig:ali_sm}
\end{subfigure}
\begin{subfigure}{0.18\textwidth}
  \includegraphics[width=\linewidth]{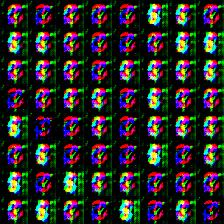}
  \caption{Unrolled}\label{fig:unrolled_sm}
\end{subfigure}
\begin{subfigure}{0.18\textwidth}%
  \includegraphics[width=\linewidth]{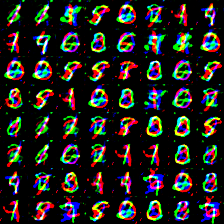}
  \caption{\acronym}\label{fig:veegan_sm}
\end{subfigure}
\end{figure}

\section{CelebA Random Sample from ALI and \acronym}

Additionally, we compared ALI and \acronym models  on the much bigger CelebA dataset \citep{liu2015faceattributes} of faces.
Our goal is to test how robust each method is when used without  extensive tuning of model architecture and hyperparameters
on a new dataset.
Therefore we use the same model architectures and hyperparameters
as we did on the CIFAR-10 data.
While ALI failed to produce any meaningful images, \acronym generates high quality images of faces. Please note that this does not mean that ALI fails on CelebA in general. Indeed, as \cite{ali2016} show, given higher capacity reconstructor and discriminator with the right  hyperparameters, it 
is possible to generate good quality images on this dataset. 
Rather, this experiment only suggests that for the simple network that we use for Stacked MNIST and CIFAR experiments, \acronym learning method was able to produce reasonable images without any further tuning or hyper parameter search.
\begin{figure}[H]
\centering
  \includegraphics[width=.6\linewidth]{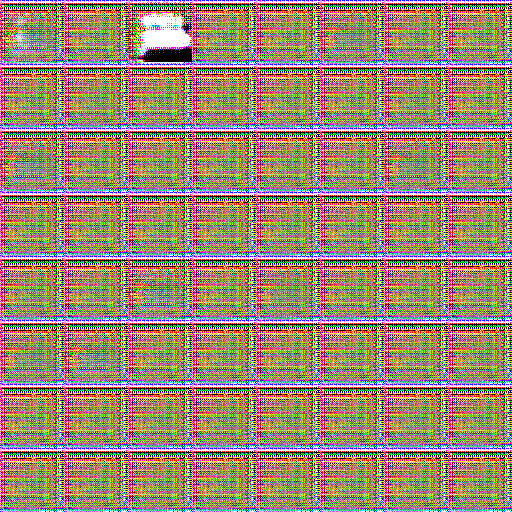}
  \caption{ALI on CelebA with simple DCGAN architecture
  and without tweaking of hyperparameters.}
  \label{fig:dcgan_cifar}
\end{figure}
\begin{figure}[H]
\centering
  \includegraphics[width=.6\linewidth]{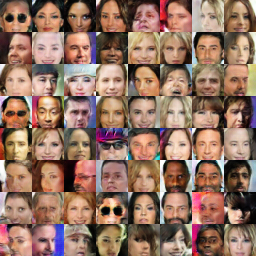}
  \caption{\acronym on CelebA with simple DCGAN architecture
  and default hyperparameters.}
  \label{fig:ali_cifar}
\end{figure}

\section{CIFAR 10 Random Sample from \acronym}

Randomly generated samples for CIFAR 10 dataset for all the 4 methods.
\begin{figure}[H]
\centering
  \includegraphics[width=.6\linewidth]{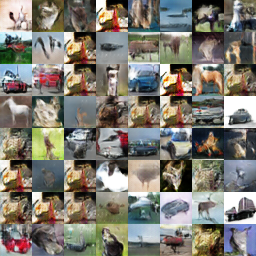}
  \caption{DCGAN on CIFAR 10 Dataset }
  \label{fig:dcgan_cifar}
\end{figure}
\begin{figure}[H]
\centering
  \includegraphics[width=.6\linewidth]{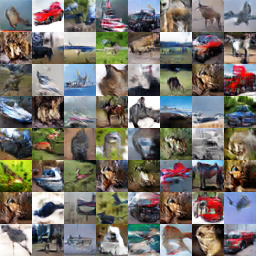}
  \caption{ALI on CIFAR 10 Dataset }
  \label{fig:ali_cifar}
\end{figure}
\begin{figure}[H]
\centering
  \includegraphics[width=.6\linewidth]{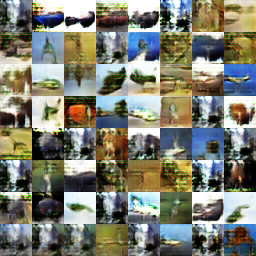}
  \caption{Unrolled GAN on CIFAR 10 Dataset }
  \label{fig:unrolled_cifar}
\end{figure}
\begin{figure}[H]
\centering
  \includegraphics[width=.6\linewidth]{pics/veegan_cifar.png}
  \caption{\acronym on CIFAR 10 Dataset }
  \label{fig:veegan_cifar}
\end{figure}

}{}

\end{document}